\documentclass{article}

\usepackage{microtype}
\usepackage{graphicx}
\usepackage{subcaption}
\usepackage{booktabs} 

\usepackage{multirow}
\usepackage[table]{xcolor} 
\usepackage{siunitx}    

\usepackage{hyperref}

\usepackage[preprint]{icml2026}

\usepackage{amsmath}
\usepackage{amssymb}
\usepackage{mathtools}
\usepackage{amsthm}

\usepackage[capitalize,noabbrev]{cleveref}

\theoremstyle{plain}
\newtheorem{theorem}{Theorem}[section]

\theoremstyle{definition}

\theoremstyle{remark}

\usepackage[textsize=tiny]{todonotes}

\usepackage{csquotes}

\icmltitlerunning{Closing the Distribution Gap in Adversarial Training for LLMs}

\begin{document}

\twocolumn[
  \icmltitle{Closing the Distribution Gap in Adversarial Training for LLMs}





  \icmlsetsymbol{equal}{*}

  \begin{icmlauthorlist}
    \icmlauthor{Chengzhi Hu}{equal,tum}
    \icmlauthor{Jonas Dornbusch}{equal,tum}
    \icmlauthor{David Lüdke}{tum}
    \icmlauthor{Stephan Günnemann}{tum}
    \icmlauthor{Leo Schwinn}{tum}
  \end{icmlauthorlist}

    \icmlaffiliation{tum}{Technical University of Munich, Germany}

  \icmlcorrespondingauthor{Chengzhi Hu}{cm.hu@tum.de}
  \icmlcorrespondingauthor{Jonas Dornbusch}{jonas.dornbusch@tum.de}

  \icmlkeywords{Machine Learning, ICML}

  \vskip 0.3in
]



\printAffiliationsAndNotice{\icmlEqualContribution}
\newcommand{\DAT}{\textsc{DAT}}
\newcommand{\TV}{\mathrm{TV}}

\begin{abstract}
\looseness=-1
  Adversarial training for LLMs is one of the most promising methods to reliably improve robustness against adversaries.
   However, despite significant progress, models remain vulnerable to simple in-distribution exploits, such as rewriting prompts in the past tense or translating them into other languages.
  We argue that this persistent fragility stems from a fundamental limitation in current adversarial training algorithms: 
  they minimize adversarial loss on their training set but inadequately cover the data distribution, resulting in vulnerability to seemingly simple attacks. 
  To bridge this gap, we propose Distributional Adversarial Training, \DAT. 
  We leverage Diffusion LLMs to approximate the true joint distribution of prompts and responses, enabling generation of diverse, high-likelihood samples that address generalization failures. By combining optimization over the data distribution provided by the diffusion model with continuous adversarial training, DAT achieves substantially higher adversarial robustness than previous methods. Code and models are available on GitHub and Hugging Face \href{https://github.com/ASSELab/DAT}{(Link)}.
\end{abstract}

\section{Introduction}

Ensuring the safety of Large Language Models (LLMs) is a prerequisite for their reliable deployment. While alignment techniques such as RLHF~\cite{ouyang2022training} successfully reduce the likelihood of harmful outputs in normal use, models remain highly vulnerable to adversarial prompts~\cite{zou2023universal, zhu2023autodan, li2024llm}. 
Adversarial Training (AT) has emerged as one of the most promising defenses, aiming to improve robustness by augmenting training data with worst-case adversarial inputs~\cite{goodfellow_explaining_2015, madry_towards_2018}. Recently, AT has been successfully scaled to LLMs~\cite{xhonneux2024efficient, casper2024defending}. 
Yet, despite these advances, models remain surprisingly brittle, exhibiting simple \textit{generalization failures}. 
A model might refuse a complex, optimized attack string but effectively bypass its own safety training if the same request is merely translated into a low-resource language~\cite{yong2023low} or rephrased in the past tense~\cite{andriushchenko2024does}. 
These are not adversarial examples in the traditional sense of optimized \textit{model-specific} perturbations. Rather, they are valid, natural inputs that lie just outside the distribution observed during training.

\begin{figure}
    \centering
    \includegraphics[width=1\linewidth]{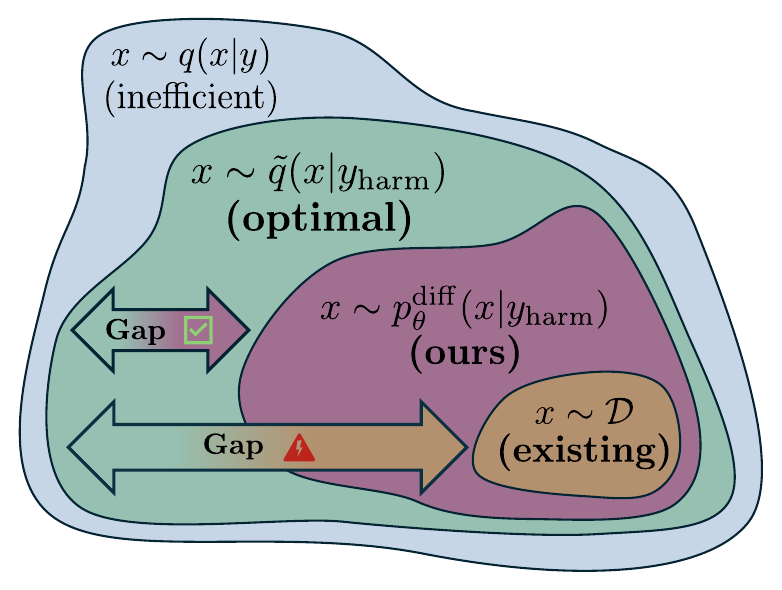}
   \caption{Standard AT minimizes the empirical robust risk over a fixed dataset $\mathcal{D}$ (brown), which provides a poor approximation of the population robust risk. This results in a distribution gap where the model remains vulnerable to the manifold of natural language $q$ (blue). Specifically, standard methods fail to cover the distribution of prompts $\tilde{q}(x\mid y_{harm})$ (green) that are likely to trigger harmful responses. Our DAT framework bridges this gap by optimizing over a surrogate distribution defined by a diffusion LLM $p^{\mathrm{diff}}_{\theta}(x | y_{\mathrm{hamr}})$ (purple), allowing the model to train on a distribution that more closely matches the true population.}
    \label{fig:hero_fig}
\end{figure}

We argue that these failures stem from the decomposition of the robust risk~\cite{madry_towards_2018}. Minimizing the robust risk over the true underlying data distribution $q(x,y)$ inherently involves minimizing two distinct sources of error: the \textit{data distribution approximation error} and the \textit{adversarial optimization error}. The former relates to how well the training data covers $q(x,y)$, while the latter relates to finding the worst-case perturbation within a local region. Standard AT algorithms effectively address the optimization error by robustifying models against \textit{model-specific} perturbations in a local ball. However, they neglect the approximation error, missing \textit{data-specific} vulnerabilities that naturally occur in $q(x,y)$ but lie outside the fixed training set (see Figure~\ref{fig:hero_fig}), such as a prompt reformulated to the past tense.

To address the approximation error, we propose optimizing over a generative surrogate, where we can directly sample data points $x$ from the high-likelihood region of harmful responses $q(x | y=\text{harmful})$. Diffusion LLMs~\cite{zhu2025llada15variancereducedpreference, ludke2025diffusion} are uniquely suited for this purpose. Unlike standard autoregressive models that only learn the conditional $q(y|x)$, diffusion models capture the \textit{joint distribution} of prompts and responses. This enables a tractable solution of the inverse problem: sampling diverse prompts $x$ conditioned on a specific harmful response $y$, thereby discovering data-specific vulnerabilities missing in the original data set.

In this work, we propose the \textit{Distributional Adversarial Training} (DAT) framework that simultaneously tackles both error sources. We leverage pretrained Diffusion LLMs to sample \textit{data-specific} adversarial examples that cover the support of $q$ and uncover vulnerabilities missed by fixed datasets. We then apply continuous AT to these samples to minimize the worst-case error, ensuring a good approximation of the local adversarial loss. This approach bridges the gap between data-specific and model-specific adversarial vulnerabilities. Our main contributions are:
\begin{itemize}
    \item \textbf{Formalization of the Robustness Gap:} We frame the failure of current defenses as a discrepancy between minimizing empirical and population-robust risk, distinguishing between model-specific perturbations and data-specific generalization failures.
    \item \textbf{Distributional Adversarial Training:} We introduce a novel training objective that combines generative sampling from Diffusion LLMs with continuous adversarial optimization, which simultaneously addresses both error sources in standard AT. Furthermore, we provide a theoretical result that, given a fidelity assumption on the surrogate function, optimizing over the surrogate distribution can reduce the gap between empirical and population-robust risk.
    \item \textbf{Improved Robustness:} Experimentally, we show that our DAT approach yields models that are robust against a variety of state-of-the-art adversarial attacks, considerably outperforming previous robustification methods that rely on static datasets. 
\end{itemize}
\section{Background}
\label{sec:background}

We briefly introduce notation regarding adversarial training and LLMs required to formalize our proposed method.

\subsection{Adversarial Training}
Adversarial Training (AT) formulates robustness as a min-max optimization problem. Let $x$ be some general input and $y$ the output of a probabilistic neural network $p_{\theta}(y \mid x)$ parametrized by $\theta$.
Further, let $\ell_{rob}(x, y; \theta) = \max_{\delta \in \Delta} \mathcal{L}(p_\theta(y \mid x+\delta))$ denote the robust loss, which represents the worst-case loss within a local perturbation set $\Delta$ (e.g., an $\epsilon$-ball in the embedding space). For a data distribution $q$ on input-output pairs, the goal is to minimize the \textit{population robust risk}~\cite{madry_towards_2018}
\begin{equation*}
    \mathcal{R}_{pop}(\theta) = \mathbb{E}_{(x,y) \sim q} [\ell_{rob}(x, y; \theta)].
\end{equation*}
In practice, we draw a finite dataset $\mathcal{D} \sim q^n$ and minimize the \textit{empirical robust risk}:
\begin{equation*}
    \mathcal{R}_{emp}(\theta) = \mathbb{E}_{(x,y) \sim \mathcal{D}} [\ell_{rob}(x, y; \theta)] \approx \mathcal{R}_{pop}(\theta).
\end{equation*}
Here, AT algorithms solve an inner maximization to approximate the supremum of the loss within $\Delta$, followed by an outer minimization to update $\theta$. While researchers have focused extensively on the inner loop, the gap between empirical and population robust risk remains unaddressed in current methods.

\subsection{Large Language Models}
Let $\mathcal{V}$ be a vocabulary and let $\mathcal{Z}=\mathcal{V}^{\le m}$ denote the set of token sequences of length at most $m$.
We denote the true distribution of language data as $q$ over $z\in\mathcal{Z}$.
Whenever convenient, we identify a sequence $z$ with a prompt--response pair $(x,y)$ via $z = x \oplus y$ where $x,y \in \mathcal{Z}$, and (by a slight abuse of notation) write $q(z)$ and $q(x,y)$ interchangeably for the corresponding probability mass function.
Autoregressive large language models (LLMs) aim to approximate the data-generating conditional $q(y\mid x)$ by modeling
\[
p^{\mathrm{AR}}_\theta(y \mid x) \;=\; \prod_{t} p^{\mathrm{AR}}_\theta\!\big(y_t \mid x, y_{<t}\big),
\]
for tokens $y_t \in \mathcal{V}$.
While such models are highly effective at conditional modeling, they do not explicitly parameterize the input marginal $q(x)$ and hence not the full joint $q(x,y)$.

\subsection{Adversarial Training for LLMs}
For classification problems, adversarial training is typically phrased as enforcing \emph{label invariance} under small perturbations of the input: for a fixed ground-truth label $y$, the learner should maintain its prediction when $x$ is replaced by $x+\delta$.
In safety-oriented robustness for large language models, another definition is used~\cite{xhonneux2024efficient, sheshadri2024latent}.
In the presence of jailbreak-style attacks, the central requirement is \emph{output safety}: the model should avoid emitting harmful content, regardless of which prompt happens to elicit it.
Accordingly, it is natural to view prompts $x$ primarily as \emph{triggers} for undesirable responses, rather than as inputs whose semantics must be preserved under perturbations.

We model harmfulness through an indicator function $h:\mathcal{Z}\to\{0,1\}$, where $h(y)=1$ denotes that a sequence $y$ contains harmful content.
To make this perspective explicit, we define the adversarial-training distribution as the natural language distribution restricted to harmful outputs,
\begin{equation*}
    \tilde{q}(x,y)
    \;:=\;
    q\bigl(x,y \mid h(y)=1\bigr)
    \;=\;
    \frac{q(x,y)\,\mathbf 1\{h(y)=1\}}{q\bigl(h(y)=1\bigr)}.
\end{equation*}
By construction, this restriction preserves the relative frequencies of harmful responses and their associated prompts as they occur under $q$ (in particular, $\tilde q(y)=q(y\mid h(y)=1)$).
Moreover, for any harmful response $y$ with $h(y)=1$ (i.e., whenever $\tilde q(y)>0$), the induced prompt conditional is unchanged,
\begin{equation}\label{eq:harmful_part}
    \tilde q(x\mid y)
    \;=\;
    \frac{\tilde q(x,y)}{\tilde q(y)}
    \;=\;
    \frac{q(x,y)}{q(y)}
    \;=\;
    q(x\mid y).
\end{equation}
Consequently, sampling $(x,y)\sim \tilde{q}$ admits a two-stage interpretation:
first draw a harmful response $y\sim q(\cdot\mid h(y)=1)$, and then draw a prompt $x\sim q(\cdot\mid y)$.
In practice, the first stage can be implemented by sampling $y$ from a dataset of harmful responses, while the second stage can be approximated by a conditional generator for $x\mid y$.

Crucially, this choice prioritizes prompts that are \emph{naturally compatible} with a harmful response under $q$.
With limited adversarial-training budget, focusing on such high-probability triggers is desirable, whereas spending capacity on prompts that are exceedingly unlikely to elicit harmful outputs is less informative for improving real-world jailbreak robustness.

\subsection{Model- and Data-Specific Prompts}\label{sec:data_specific}

We call a prompt $x$ \textit{data-specific} if it has high likelihood under the harmful prompt marginal $\tilde q(x)$, i.e., if it is likely to elicit a harmful response under the underlying data distribution. Since different models are generally trained on large fractions of the same data, such prompts tend to be transferable across models:
\begin{equation}\label{eq:data_specific}
\tilde q(x) \approx \tilde p_{\theta_1}(x) \approx \cdots \approx \tilde p_{\theta_N}(x),
\end{equation}
with the marginal $
\tilde p_{\theta}(x)
\coloneqq
\sum_{y \in \mathcal{Z}:\, h(y)=1} p_{\theta}(x,y).
$

We contrast this to \textit{model-specific} prompts, which we define as prompts that have a low likelihood to trigger harmful responses under the true harmfulness distribution but a high likelihood to trigger harmful responses for individual models $\tilde{q}(x) < \tilde p_{\theta_1}(x)$. 
Accordingly, these prompts generally do not transfer between models $\tilde p_{\theta_1}(x) \gg \tilde p_{\theta_2}(x)$. This includes most attacks, such as GCG or BoN (see Figure~\ref{fig:transfer_plot}).

\section{Method}
\label{sec:method}

We introduce Distributional Adversarial Training (\DAT), a framework that leverages generative surrogate models to address a generalization gap in previous AT methods by better approximating the population risk.

\subsection{Adversarial Training Suffers from a Generalization Gap}
We argue that current adversarial training approaches for LLMs fail to generalize because the empirical risk $\mathcal{R}_{emp}$ insufficiently approximates the population risk $\mathcal{R}_{pop}$ for two main reasons. First, the diversity of open-ended natural language, which is difficult to cover with small, finite datasets $\mathcal{D}$. Secondly, the high sample complexity required for robust learning~\citep{schmidt2018adversarially}. Consequently, models remain vulnerable to simple exploits like past-tense rephrasing~\cite{andriushchenko2024does} or translation~\cite{yong2023low}, outside the training distribution.

\subsection{Generative Surrogates}
To bridge the generalization gap, we propose to replace the static dataset $\mathcal{D}$ with a generative surrogate $p_{\theta}(x, y)$ that allows us to better approximate the population risk over the distribution of harmful inputs $\tilde{q}(x,y)$.
A suitable surrogate must therefore satisfy three key desiderata.

\textbf{Conditional Sampling.} First, the surrogate must be able to effectively invert the generation process, to sample data $x$ conditioned on a harmful response $y$, as in Equation~\ref{eq:harmful_part}.

\textbf{Data Specific.} Second, sampled prompts should be \textit{data specific} as described in Equation~\ref{eq:data_specific}.

\textbf{Diversity.} Third, it must cover the diverse support of natural language, preventing the training from overfitting to a narrow set of attack patterns.

\begin{figure}[t]
    \centering
    \includegraphics{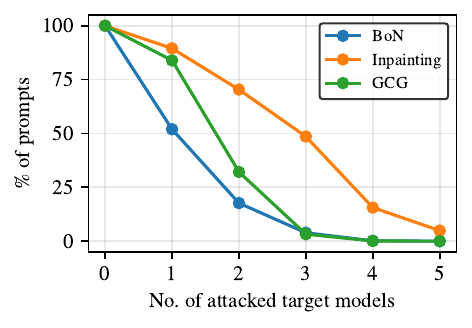}
    \caption{Cumulative transfer ASR across five target models (Gemma3-12B \cite{team2025gemma}, Qwen2.5-7B\cite{qwen2025qwen25technicalreport}, Zephyr-7B\cite{tunstall2023zephyr}, Llama3-8B-LAT \cite{sheshadri2024latent}, Llama3-8B-CB \cite{zou2024circuitbreaker}) from attacks on Llama3-8B. Diffusion-based Inpainting attacks exhibit significantly higher transferability than model-specific optimization (GCG) or heuristic perturbations (BoN), suggesting that conditional sampling from the diffusion surrogate effectively identifies data-specific vulnerabilities that generalize across architectures and defenses.}
    \label{fig:transfer_plot}
    \vspace{-0.8cm}
\end{figure}

\subsection{Diffusion LLMs as Surrogate}
We employ Diffusion LLMs as our generative surrogate because they satisfy our technical desiderata for Distributional Adversarial Training. 
Diffusion models have recently emerged as a powerful alternative for text generation, modeling the joint distribution $q(x,y)$ rather than the standard autoregressive conditional $q(y|x)$~\citep{nie2025largelanguagediffusionmodels}. Crucially, this enables sampling from the conditional distribution $q(x|y)$. Recently, ~\citet{ludke2025diffusion} demonstrated that fixing a target response $y$ and performing inpainting-like conditioning, diffusion LLMs can effectively sample adversarial prompts $x \sim p_{\theta}^{\mathrm{diff}}(x|y)$ directly from the learned posterior, fulfilling our first requirement.

Moreover, as shown in \cref{fig:transfer_plot}, prompts sampled via diffusion exhibit significantly higher transferability than heuristic attacks such as GCG or BoN, confirming that they capture data-specific properties of the true distribution and thus fulfill the data specificity requirement.

Lastly, diffusion models fulfill our diversity criteria. An alternative approach to approximate the posterior $\tilde{q}(x|y)$ would be to use discrete adversarial attacks (e.g., GCG). However, we show in \cref{fig:bert_diversity} that diffusion-based sampling generates substantially more diverse samples, ensuring broader coverage of the support.


\begin{figure}[t]
    \centering
    \includegraphics{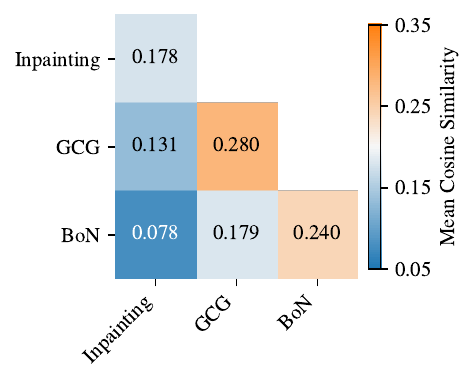}
    \caption{Diversity of generated attack strings measured using SBERT embeddings (all-MiniLM-L6-v2; \citealp{reimers-gurevych-2019-sentence}). Each cell reports the mean pairwise cosine similarity between samples generated by two methods. Diffusion-based attacks exhibit the lowest intra-method similarity (0.178), indicating substantially greater sample diversity than GCG and BoN.}
    \label{fig:bert_diversity}
\end{figure}

\subsection{Tractability via Monte Carlo Sampling}
While the ideal objective is to minimize the expected worst-case loss over the full diffusion distribution conditioned on harmful targets $p^{\mathrm{diff}}_{\theta}(x,y | h(y)=1)$, direct optimization is intractable. We resolve this by approximating the expectation via Monte Carlo sampling. In practice, we leverage an initial dataset of harmful pairs $(x, y)$. We fix the affirmative harmful responses $y$ from this dataset, creating $\mathcal{D}_{\mathrm{harm}}$ of harmful responses, and use the diffusion model to \textit{conditionally generate} diverse, new data samples $\tilde{x} \sim p^{\mathrm{diff}}_{\theta}(x \mid y)$ using the method proposed in~\citep{ludke2025diffusion}. This effectively expands the training set by generating many variations of prompts for each harmful target. In practice, we additionally filter the generated samples according to their effectiveness $p_{\theta}^{\mathrm{AR}}(y \mid x)$, keeping only those that trigger harmful responses with high likelihood.

\subsection{Distributional Adversarial Training}
Our proposed Distributional Adversarial Training (\textit{DAT}) unifies these concepts. We optimize the following objective:
\begin{equation}
\begin{aligned}
\mathcal{L}_{DAT} = &\underbrace{\mathbb{E}_{y \sim \mathcal{D}_{\mathrm{harm}}, x \sim p^{\mathrm{diff}}_{\theta}(x|y)}}_{\text{Monte Carlo Sampling}} \\
&\left[ \underbrace{\max_{\delta \in \Delta} \mathcal{L}_{adv}(x+\delta, y, y_{refusal})}_{\text{Adversarial Perturbation}} \right] + \lambda_{\mathrm{ret}}\mathcal{L}_{KL}
\end{aligned}
\end{equation}
where we use continuous adversarial training (CAT) to compute the adversarial loss $\mathcal{L}_{adv}$ \citep{xhonneux2024efficient}. Specifically, we solve the \textbf{inner loop} maximization by applying continuous perturbations $\delta$ to the token embeddings:
\begin{equation}
    \mathcal{L}_{adv} = \underbrace{\log p_\theta(y \mid x+\delta)}_{\text{Away Loss}} - \underbrace{\log p_\theta(y_{refusal} \mid x+\delta)}_{\text{Toward Loss}}.
\end{equation}
Here, $y$ is the desired affirmative harmful response and $y_{refusal}$ is the model's safe refusal. The \textbf{outer loop} minimization then penalizes the model for these discovered vulnerabilities. To prevent the model from collapsing to degenerate solutions and to maintain general performance, we regularize the outer minimization with a KL divergence term on a retain set $\mathcal{D}_{\mathrm{ret}}$ weighted by a scalar $\lambda_{KL}\in\mathbb{R}^{+}$:
\begin{equation}
    \mathcal{L}_{KL} = \mathbb{E}_{x \sim \mathcal{D}_{\mathrm{ret}}} [D_{\mathrm{KL}}(p_{\theta_0}(\cdot \mid x) \parallel p_\theta(\cdot \mid x))],
\end{equation}
where $p_{\theta_0}$ is the model before adversarial training. This unified objective ensures that the model is robust against diverse, high-likelihood data samples discovered in the outer loop while improving robustness against adversarial attacks in the inner loop.

\subsection{Theoretical Justification}
We formally justify the use of the diffusion surrogate to approximate the population risk. Let $\mathcal{R}_{pop}(\theta) = \mathbb{E}_{(x,y) \sim q}[\ell_{rob}(x,y;\theta)]$ be the true population risk, where $\ell_{rob}$ is the inner robust loss. We show that optimizing our surrogate objective $\mathcal{R}_{diff}$ bounds the population risk.

\begin{theorem}[Surrogate Fidelity Bound]
\label{thm:fidelity}
Assume the robust loss is bounded, i.e., $|\ell_{rob}(x,y;\theta)| \le M$ for all $(x,y)\in\mathcal{Z}$, and the diffusion surrogate satisfies the conditional fidelity assumption
\[
\mathbb{E}_{y\sim \tilde q(y)}
\left[
\TV\!\bigl(q(\cdot\mid y),p^{\mathrm{diff}}_\theta(\cdot\mid y)\bigr)
\right]\le \varepsilon .
\]
Then
\[
|\mathcal{R}_{pop}(\theta) - \mathcal{R}_{diff}(\theta)| \le 2 M \varepsilon .
\]
\end{theorem}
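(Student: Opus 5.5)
The plan is to write both risks as iterated expectations that share the same outer distribution over harmful responses, and to bound the inner discrepancy by total variation. I will read $\mathcal{R}_{pop}$ here as the population robust risk over the adversarial-training distribution $\tilde q$ of Section~\ref{sec:background}. That is the distribution the surrogate is meant to approximate, and the conditional fidelity assumption is stated only under $y\sim\tilde q(y)$. Correspondingly, I will take $\mathcal{R}_{diff}(\theta)=\mathbb{E}_{y\sim\tilde q(y)}\mathbb{E}_{x\sim p^{\mathrm{diff}}_\theta(\cdot\mid y)}[\ell_{rob}(x,y;\theta)]$, the idealized version of the Monte Carlo objective in $\mathcal{L}_{DAT}$.

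First, I use the two-stage factorization $\tilde q(x,y)=\tilde q(y)\,\tilde q(x\mid y)$ together with Equation~\ref{eq:harmful_part}, which gives $\tilde q(x\mid y)=q(x\mid y)$ whenever $\tilde q(y)>0$. This lets me write
\[
\mathcal{R}_{pop}(\theta)=\mathbb{E}_{y\sim\tilde q(y)}\Bigl[\sum_{x\in\mathcal{Z}} q(x\mid y)\,\ell_{rob}(x,y;\theta)\Bigr].
\]
Since $\mathcal{Z}$ is finite, all sums are finite and exchanging sum and expectation is harmless. Subtracting $\mathcal{R}_{diff}$ then gives
\[
\mathcal{R}_{pop}(\theta)-\mathcal{R}_{diff}(\theta)=\mathbb{E}_{y\sim\tilde q(y)}\Bigl[\sum_{x}\bigl(q(x\mid y)-p^{\mathrm{diff}}_\theta(x\mid y)\bigr)\ell_{rob}(x,y;\theta)\Bigr].
\]

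Second, for each fixed $y$ I bound the inner sum in absolute value. The triangle inequality and $|\ell_{rob}|\le M$ give
\[
\Bigl|\sum_x \bigl(q(x\mid y)-p^{\mathrm{diff}}_\theta(x\mid y)\bigr)\ell_{rob}\Bigr|\le M\sum_x\bigl|q(x\mid y)-p^{\mathrm{diff}}_\theta(x\mid y)\bigr|=2M\,\TV\bigl(q(\cdot\mid y),p^{\mathrm{diff}}_\theta(\cdot\mid y)\bigr).
\]
This uses the convention $\TV=\tfrac12\|\cdot\|_1$. Pulling the absolute value inside the outer expectation via Jensen's inequality and applying the fidelity assumption then yields $|\mathcal{R}_{pop}-\mathcal{R}_{diff}|\le 2M\varepsilon$.

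The main obstacle is not the computation but making the definitions line up. The bound only holds if both risks use the same $y$-marginal $\tilde q(y)$ and the population conditional is exactly $q(\cdot\mid y)$. This is why Equation~\ref{eq:harmful_part} must be invoked explicitly, and why I state the interpretation of $\mathcal{R}_{pop}$ as restricted to harmful $y$. If one instead took $\mathcal{R}_{pop}$ over the full $q$, the non-harmful responses would contribute an uncontrolled term, and the statement would need an extra marginal-mismatch term. I will therefore fix these definitions at the start of the proof. As an aside, the constant could be improved to $M$ by centering $\ell_{rob}$, but $2M$ is what the statement requires.
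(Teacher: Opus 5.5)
Your proof is correct and matches the paper's argument step for step: like you, the appendix defines $\mathcal{R}_{pop}$ over $\tilde q$ and rewrites it as $\mathbb{E}_{y\sim\tilde q(y)}\mathbb{E}_{x\sim q(x\mid y)}[\ell_{rob}]$ via $\tilde q(x\mid y)=q(x\mid y)$; it then moves the absolute value inside, applies $|\mathbb{E}_P f-\mathbb{E}_Q f|\le 2M\,\TV(P,Q)$ (which you derive directly from the $\ell_1$ norm instead of citing it), and finishes with the fidelity assumption. One correction to your closing aside: centering only yields $(\sup\ell_{rob}-\inf\ell_{rob})\,\TV$, which under $|\ell_{rob}|\le M$ can still equal $2M\,\TV$ (take $q(\cdot\mid y)=\delta_a$, $p^{\mathrm{diff}}_\theta(\cdot\mid y)=\delta_b$, $\ell_{rob}(a,y;\theta)=M$, $\ell_{rob}(b,y;\theta)=-M$), so $2M\varepsilon$ is tight here, and the constant $M$ would require the loss to take values in an interval of length $M$, e.g.\ $[0,M]$.
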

This result (proof in \cref{app:proof_theorem1}) guarantees that improving the generative fidelity of the diffusion model directly translates to a tighter approximation of the true robust generalization risk, unlike standard AT which is limited by the finite size of $\mathcal{D}$.

\section{Experiment Setup}\label{sec:exp_setup}

We performed all experiments on a cluster of A100, H100, and H200 nodes.

\textbf{Models.}
We study two instruction-tuned LLMs: \textsc{Llama3-8B-Instruct} ~\citep{grattafiori2024llama}, \textsc{Qwen2.5-14B-Instruct}~\citep{qwen2025qwen25technicalreport}. 
For brevity, we refer to the models as Llama3-8B and Qwen2.5-14B when no ambiguity arises.
Unless stated otherwise, we run most ablations on Llama3-8B and report multi-model results in \cref{tab:main_result}. For more information see Appendix~\ref{app:Models}.


\textbf{Training Data.}
\label{sec: training_data}
We use the \textsc{HarmBench} adversarial training split~\citep{mazeika2024harmbench} as the source of harmful behaviors.
For each behavior, we generate candidate harmful prompts via diffusion inpainting following~\citet{ludke2025diffusion}: we condition the pretrained diffusion LLM \textsc{LLaDA-8B-Base}~\cite{nie2025largelanguagediffusionmodels} (LLaDA) on an affirmative harmful target and sample $1000$ prompt variants.
Because conditional samples can vary in linguistic quality and effectiveness, we filter these candidates by evaluating each prompt on the target model and scoring the resulting completion with the StrongREJECT rubric judge~\citep{souly2024strongreject}.
For the main experiments, we keep the top $16$ prompts per behavior, yielding $1600$ diffusion-generated harmful prompts in total.
We study the effect of dataset size and filtering in \cref{sec:training_data_size} and \cref{sec:training_data_quality}.
As in prior work~\cite{xhonneux2024efficient}, we use UltraChat200k as a retain set to preserve utility.

\textbf{DAT Training.}
We perform \DAT{} by applying Continuous Adversarial Training (CAT)~\citep{xhonneux2024efficient} to the diffusion-generated prompts.
Based on preliminary findings in our experiments that they improve the utility-robustness trade-off, we slightly adapt the original CAT algorithm. We replace the cross-entropy retain loss with a KL divergence term, which was found to improve utility in~\cite{sheshadri2024latent}. Additionally, we remove loss thresholds, which we found to improve robustness. These changes are applied to both \DAT{} and the CAT baseline.
Regarding the adversarial constraints, we set the $\epsilon$-ball radius to infinity, limiting attack strength solely by the number of iterations to reduce the number of tunable hyperparameters.
Finally, to ensure a fair comparison, we align the total number of parameter updates and optimizer settings across all trained models. We selected models for full robustness evaluations based on proxy evaluations similar to those described in~\cite{beyer2025fast}.
All hyperparameters are tuned on the CAT baseline and transferred to \DAT{}. See also \cref{app:Training}.

\textbf{Baselines.}
We compare \DAT{} against the original (non-adversarially trained) models, standard adversarial training methods (CAT~\citep{xhonneux2024efficient}, LAT~\citep{sheshadri2024latent}), hybrid approaches that combine discrete and continuous attacks (MixAT-GCG~\citep{dekany2025mixat}), and Circuit Breakers (CB;~\citep{zou2024improving}).

\textbf{Evaluation.}
We evaluate robustness on 100 malicious behaviors from \textsc{JailbreakBench}~\citep{chao2024jailbreakbench} using a suite of model-specific and data-specific attacks using the implementations provided in AdversariaLLM~\cite{beyer2025adversariallm} to improve reproducibility~\cite{schwinn2025adversarial,beyer2025llm}. This includes: GCG~\citep{zou2023universal}, PAIR~\citep{chao2023jailbreakingpair}, BoN~\citep{hughes2024best}, and diffusion Inpainting~\citep{ludke2025diffusion}. Furthermore, we also evaluate robustness against sampling-based attacks using the original prompt, which we refer to as Direct Attack~\cite{beyer2025sampling, scholten2025a}.
For each behavior–attack pair, we generate multiple completions and score every output with the StrongREJECT judge~\citep{souly2024strongreject}, which assigns a harmfulness score $\mathcal{H} \in [0,1]$ to the prompt–response pair.
We classify an output as harmful if $\mathcal{H}$ exceeds a fixed threshold of $0.5$ and mark the attack as successful.
To capture worst-case robustness across attacks we also report a Best-of-All (BoA) ASR, analogous to the ALO-ASR metric of MixAT~\citep{dekany2025mixat}, which marks a behavior as broken if any attack succeeds. We provide a list of attack hyperparameters in Appendix~\ref{app:Attacks}.
Finally, we assess utility using XSTest~\citep{rottger-etal-2024-xstest} for helpfulness, and MMLU~\citep{hendrycks2021measuringmassivemultitasklanguage}, ARC-E, and ARC-C~\citep{allenai:arc} for general capabilities, following their standard evaluation protocols.


\newcommand{\datASRCell}[1]{%
  \begingroup
  \ifdim #1pt < 0.250001pt \cellcolor{green!18}\else
    \ifdim #1pt < 0.400001pt \cellcolor{green!10}\else
      \ifdim #1pt < 0.550001pt \cellcolor{green!4}\else
        \ifdim #1pt < 0.700001pt \cellcolor{red!4}\else
          \ifdim #1pt < 0.850001pt \cellcolor{red!10}\else
            \cellcolor{red!18}%
          \fi
        \fi
      \fi
    \fi
  \fi
  \num[round-mode=places,round-precision=2,minimum-decimal-digits=2]{#1}%
  \endgroup
}
\newcommand{\datUtilCell}[1]{%
  \begingroup
  \dimen0=\dimexpr 1pt - #1pt\relax
  \ifdim \dimen0 < 0.250001pt \cellcolor{green!18}\else
    \ifdim \dimen0 < 0.400001pt \cellcolor{green!10}\else
      \ifdim \dimen0 < 0.550001pt \cellcolor{green!4}\else
        \ifdim \dimen0 < 0.700001pt \cellcolor{red!4}\else
          \ifdim \dimen0 < 0.850001pt \cellcolor{red!10}\else
            \cellcolor{red!18}%
          \fi
        \fi
      \fi
    \fi
  \fi
  \num{#1}%
  \endgroup
}
\newcommand{\asr}[1]{\datASRCell{#1}}
\newcommand{\util}[1]{\datUtilCell{#1}}
\newcommand{\na}{\multicolumn{1}{c}{--}}

\begin{table*}[t]
\centering
\small
\setlength{\tabcolsep}{3.0pt}
\caption{{Main results on Llama3-8B and Qwen2.5-14B across robustness and utility benchmarks. Lower ASR indicates stronger robustness, while higher utility scores indicate better model helpfulness and capabilities.}
}
\label{tab:main_result}
\resizebox{0.85\textwidth}{!}{%
\begin{tabular}{l l *{6}{c} *{4}{c}}
\toprule

\multirow{2}{*}{Model} &
\multirow{2}{*}{Method} &
\multicolumn{6}{c}{\textbf{ASR $\downarrow$}} &
\multicolumn{4}{c}{\textbf{Utility $\uparrow$}} \\
\cmidrule(lr){3-8}\cmidrule(lr){9-12}
& &
{GCG} & {Pair} & {BoN} & {Direct} & {Inpainting} & {BoA}
& {XSTest} & {MMLU} & {ARCe} & {ARCc} \\
\midrule
\multirow{9}{*}{\rotatebox{90}{Llama3-8B}}
& Default & \asr{0.39} & \asr{0.3} & \asr{0.99} & \asr{0.33} & \asr{0.98} & \asr{1} & \util{0.660} & \util{0.638} & \util{0.815} & \util{0.530} \\
\cmidrule(lr){2-12}
& CAT & \asr{0.04} & \asr{0.13} & \textbf{\asr{0}} & \textbf{\asr{0}} & \asr{0.88} & \asr{0.88} & \util{0.464} & \util{0.637} & \util{0.810} & \util{0.520} \\
& LAT & \asr{0.09} & \asr{0.15} & \textbf{\asr{0}} & \asr{0.01} & \asr{0.94} & \asr{0.94} & \util{0.444} & \util{0.619} & \util{0.784} & \util{0.485} \\
& CB & \textbf{\asr{0.03}} & \asr{0.09} & \asr{0.56} & \asr{0.13} & \asr{0.92} & \asr{0.94} & \textbf{\util{0.672}} & \util{0.636} & \util{0.815} & \util{0.533} \\
& MixAT-GCG & \asr{0.04} & \textbf{\asr{0.03}} & \asr{0.02} & \asr{0.01} & \asr{0.77} & \asr{0.77} & \util{0.456} & \util{0.612} & \textbf{\util{0.827}} & \util{0.513} \\
\cmidrule(lr){2-12}
&\shortstack[l]{Diffusion-only} & \asr{0.2} & \asr{0.12821} & \asr{0.875} & \asr{0.04} & \textbf{\asr{0.23}} & \asr{0.88} & \util{0.536} & \textbf{\util{0.644}} & \util{0.823} & \textbf{\util{0.536}} \\
& \textbf{DAT} & \asr{0.08} & \asr{0.08} & \textbf{\asr{0}} & \asr{0.02} & \asr{0.32} & \textbf{\asr{0.36}} & \util{0.464} & \util{0.639} & \util{0.813} & \util{0.521} \\
\midrule
\multirow{4}{*}{\rotatebox{90}{Qwen2.5-14B}}
& Default & \asr{0.92} & \asr{0.45} & \asr{0.98} & \asr{0.27} & \asr{0.99} & \asr{1} & \util{0.748} & \textbf{\util{0.789}} & \textbf{\util{0.862}} & \textbf{\util{0.619}} \\
\cmidrule(lr){2-12}
& CAT & \asr{0.09} & \asr{0.14} & \asr{0.11} & \asr{0} & \asr{0.92} & \asr{0.93} & \textbf{\util{0.544}} & \util{0.787} & \util{0.856} & \util{0.607} \\
& MixAT-GCG & \asr{0.05} & \textbf{\asr{0.02}} & \textbf{\asr{0}} & \textbf{\asr{0}} & \asr{0.75} & \asr{0.75} & \util{0.388} & \util{0.778} & \textbf{\util{0.862}} & \textbf{\util{0.619}} \\
\cmidrule(lr){2-12}
& \textbf{DAT} & \textbf{\asr{0.05}} & \asr{0} & \asr{0.02} & \textbf{\asr{0.01}} & \textbf{\asr{0.14}} & \textbf{\asr{0.18}} & \util{0.464} & \util{0.773} & \util{0.816} & \util{0.584} \\
\bottomrule
\end{tabular}%
}
\end{table*}

\section{Results}

Our experiments evaluate if \DAT{} closes the distribution gap identified in \cref{sec:method}. 
We conduct a series of experiments to assess: \textbf{A)} if \DAT{} improves worst-case robustness
against model-specific and data-specific attacks (\S\ref{sec:robustness}), \textbf{B)} the importance of continuous adversarial training in our framework (\S\ref{sec:adversarial_ablation}), \textbf{C)} if \DAT{} achieves Pareto-optimal trade-offs between robustness and utility (\S\ref{sec:pareto}), \textbf{D)} if we can
empirically validate our theoretical argument that better approximating the data distribution via increased diffusion sampling leads to improved robustness (\S\ref{sec:training_data_size}), and \textbf{E)} if it is necessary to train on \textit{data specific} prompts or this property is not required to minimize the robust risk (\S\ref{sec:training_data_quality}).

\subsection{\DAT{} Considerably Improves Worst-Case Robustness}\label{sec:robustness}

We present our main results in Table \ref{tab:main_result}. 

\textbf{Robustness.} While baseline defenses (CAT, LAT) and CB achieve low ASR on gradient-based and optimization attacks like GCG, Pair, they exhibit considerable generalization failures. Notably, CB remains vulnerable to BoN, reaching an ASR of $56\%$. Furthermore, the \textit{Inpainting} attack breaks all prior defenses. For Llama3-8B, Inpainting achieves ASRs ranging from $77\%$ (MixAT) to $94\%$ (LAT). Even MixAT, which explicitly combines different attack strategies during training, fails to defend against Inpainting. We hypothesize that MixAT cannot close this generalization gap because its training augmentation relies on model-specific attacks such as GCG rather than addressing data-specific vulnerabilities like \DAT. As we are generally interested in a lower bound on robustness, we also computed the BoA ASR, which provides a lower bound on robustness against an ensemble of all attacks. Here, DAT substantially outperforms all other approaches, achieving an ASR of $36\%$ for the Llama model compared to the second-best method, MixAT ($77\%$). Results for Qwen2.5-14B are similar. Here, \DAT{} reduces the BoA ASR to $18\%$, whereas all other adversarial training approaches remain vulnerable, ranging from $75\%$ to $93\%$.

\textbf{Utility.} 
We evaluate utility across multiple benchmarks: XSTest for helpfulness, and MMLU, ARC-E, and ARC-C for general capabilities.
On Llama3-8B, \DAT{} maintains competitive helpfulness ($0.464$), with good utility across all metrics, performing comparably to other adversarial training baselines such as CAT.
While CB achieves a higher XSTest score ($0.672$), we note that its training process explicitly incorporates the XSTest dataset. Consequently, this result likely reflects training performance rather than generalization, rendering a direct comparison with the other methods for this specific metric non-meaningful. On Qwen2.5-14B, \DAT{} maintains reasonable utility while achieving substantially higher robustness than all baselines.
These results demonstrate that \DAT{} achieves a favorable robustness--utility trade-off across diverse model sizes and architectures.

\subsection{Good Approximation of the Adversarial Loss Improves Robustness}\label{sec:adversarial_ablation}

Next, we analyze a Diffusion-only ablation, using the diffusion model as a surrogate for the data distribution $q(x,y)$ without continuous adversarial optimization on the Llama3-8B model. Results are shown in Table~\ref{tab:main_result} under the method name Diffusion-only. The trained model achieves remarkable robustness against Inpainting ($23\%$ ASR) compared to previous approaches. 
However, the Diffusion-only model remains vulnerable to other adversarial attacks (e.g., GCG, BoN). In contrast, \DAT{} achieves high robustness against both attack types. This demonstrates that minimizing the total robust risk requires a dual approach: a valid approximation of the population risk in the outer loop of adversarial training, and of the worst-case adversarial loss for each data point in the inner loop.


\subsection{\DAT{} Exhibits Pareto-Optimal Robust--Utility Trade-Offs}\label{sec:pareto}

For a defense to be practical it must preserve the model's helpfulness and compliance on benign requests. 
We evaluate this trade-off on Llama3-8B by varying the regularization strength $\lambda_{KL}$ (balancing the robust loss and the KL-divergence on the retain set), and the number of performed attack iterations in the inner loop of adversarial training. This allows us to trace the Pareto frontier between \textit{Inpainting Robustness} ($1 - \text{ASR}$) and \textit{XSTest Compliance}. 
As illustrated in \Cref{fig:pareto_tradeoff}, \DAT{} consistently expands the Pareto-front across all hyperparameter configurations. 
Notably, as we tune for higher compliance and utility, \DAT{} maintains substantially greater robustness than baseline adversarial training methods such as CAT and MixAT-GCG. 
For instance, even for compliance levels close to that of the original model, \DAT{} remains as robust as previous adversarial training approaches. 
These results show that \DAT{} maintains a higher level of robustness across various compliance targets compared to baseline approaches, which can be effectively controlled with interpretable hyperparameters.

\begin{figure}[t]
    \centering
    \includegraphics{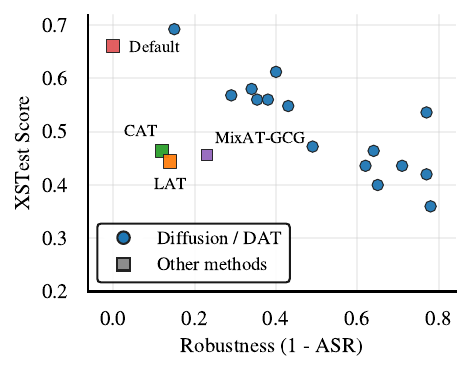}
    \caption{Pareto frontier for Llama3-8B showing the trade-off between Inpainting Robustness ($1-\text{ASR}$) and XSTest compliance rate. \DAT{} achieves superior trade-offs across all hyperparameter settings.}
    \label{fig:pareto_tradeoff}
    \vspace{-0.5cm}
\end{figure}

\subsection{Better Approximating the Data Distribution Improves Robustness}
\label{sec:training_data_size}

A core motivation of \DAT{} is that expanding the empirical training distribution via a generative surrogate reduces the approximation error inherent in robust learning. 
To verify this, we investigate how the number of unique diffusion-generated prompts per behavior affects the final model robustness.
As shown in \Cref{fig:data_size}, increasing the number of samples $M$ in our Monte Carlo approximation results in a consistent reduction in the Inpainting ASR, which is the strongest attack in our experiments. 
Specifically, as the number of unique prompts per harmful behavior in training data increases from $1$ to $16$, the ASR on Llama3-8B decreases from $54\%$ to $22\%$.
This improvement provides empirical evidence for \cref{thm:fidelity}: by more accurately approximating the conditional distribution $q(x|y)$ with the diffusion surrogate $p^{\mathrm{diff}}_\theta$, we effectively close the gap between empirical and population-robust risk.
Notably, this scaling of harmful data does not compromise the model's helpfulness. Helpfulness scores on XSTest remain stable around 0.4 across all sample sizes.
\begin{figure}[t]
    \centering
    \includegraphics{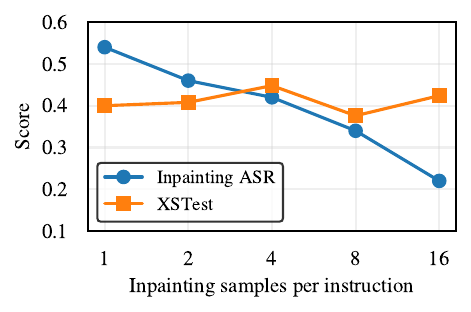}
    \caption{Inpainting ASR as a function of the number of unique diffusion samples $M$ per behavior in the training data. Robustness gradually improves as we better approximate the population risk $q(x,y)$, while helpfulness stays consistent.}
    \label{fig:data_size}
\end{figure}

\subsection{Data Specificity is Required for Robustness}
\label{sec:training_data_quality}

We further evaluate whether the observed robustness gains are simply due to increased data volume or whether the diffusion surrogate's ability to sample from the region of high-likelihood harmful prompts $\tilde{q}(x \mid y)$ is essential.
We conduct an ablation where we vary the quality of generated samples by filtering them based on the conditional likelihood $p^{\mathrm{AR}}_\theta(y \mid x)$ under the target model. 


We follow the data selection approach in \cref{sec: training_data} by training two models: one using the $16$ best samples per behavior (same as \DAT{} in \Cref{tab:main_result}) and another using the $16$ worst samples selected from $1000$ generations from the diffusion LLM. 
When normalizing for equal compliance, training on low-likelihood samples, those that fail to elicit the target harmful response $y$ even from an undefended model, yields limited robustness ($39\%$). 
In contrast, high-likelihood samples discovered by $p^{\mathrm{diff}}_\theta$ are highly effective, achieving $78\%$ robustness. Notably, using even a single high-likelihood sample per behavior achieves higher robustness ($46\%$) than using the set of low-likelihood samples (see \Cref{fig:data_size}).
Our results provide evidence that the surrogate must faithfully capture the harmful part of the distribution $\tilde{q}$; samples that fall outside the high-likelihood manifold of natural attacks do not contribute to minimizing the robust risk effectively. Baselines like MixAT rely on model-specific optimization that diverges from the natural manifold, explaining their inability to generalize to data-specific vulnerabilities such as inpainting.

\section{Related Work}

\paragraph{Adversarial Training in LLMs}
Adversarial training has been one of the most effective methods to improve robustness in deep learning \citep{szegedy_intriguing_2014, goodfellow_explaining_2015, madry_towards_2018, shafahi_adversarial_2019, wong_fast_2019, kireev2022effectiveness}. However, its application to LLMs faces significant computational challenges due to the discrete nature of text.
Early approaches often relied on computationally expensive discrete optimization methods to generate adversarial examples during training \citep{mazeika2024harmbench}.
To address this, more efficient methods operating in the continuous embedding space or latent representations have been proposed.
For instance, \citet{xhonneux2024efficient} introduced an efficient AT framework using continuous embedding attacks and demonstrated that this approach considerably improves robustness against discrete attacks while being orders of magnitude more efficient.
Other works focus on the model's internal representations: \citet{zou2024improving} proposed short-circuiting to remap harmful representations to refusal states; \citet{sheshadri2024latent} and \citet{casper2024defending} introduced latent adversarial training, which perturbs latent activations to trigger and mitigate failure modes; and \citet{yu2025robust} proposed Refusal Feature Adversarial Training (ReFAT), which ablates specific refusal features in the residual stream.
Additionally, \citet{liu2024adversarial} proposed a two-stage adversarial tuning framework involving token-level and out-of-distribution adversarial prompt generation.

A recent relevant approach is MixAT \citep{dekany2025mixat}, which combines model-specific discrete and continuous attacks to improve robustness. While MixAT focuses on better approximating the local worst-case adversary through this combination, our approach instead focuses on better approximating the population risk.
Lastly, some classical approaches have used generative models successfully in the context of adversarial defense.
For example, \citet{liu_gandef_2019} proposed GanDef, a GAN-based defense that utilizes an adversarial game to regularize feature selection for improved robustness. \citet{wang2023better} use an unconditional diffusion model to augment the training data, considerably improving robustness.

\paragraph{Adversarial Attacks in LLMs}
Adversarial attacks on LLMs have evolved rapidly, ranging from optimization-based methods to automated generation techniques.
Optimization-based attacks like GCG \citep{zou2023universal}, soft prompts~\citep{schwinn2023adversarial, schwinn2024soft} and PGD \citep{geisler2024attacking} use gradient information to find adversarial suffixes or continuous perturbations.
Automated methods such as AutoDAN \citep{zhu2023autodan}, Open Sesame~\citep{lapid2023open}, and PAIR \citep{chao2023jailbreakingpair} leverage attacker LLMs or genetic algorithms to generate interpretable jailbreaks.
Other notable approaches include purely random and model-unspecific attacks such as Best-of-N jailbreaking \citep{hughes2024best}, or simple adaptive attacks \citep{andriushchenko2024jailbreaking} which combine human manual tuning with optimization.
Recently, \citet{geisler2025reinforce} proposed using REINFORCE to optimize for expected harmfulness, addressing limitations in fixed-target objectives.
Some of these attack methods have been integrated into standardized evaluation frameworks like HarmBench \citep{mazeika2024harmbench} and used for adversarial training in works like MixAT \citep{dekany2025mixat}. Finally, \citet{ludke2025diffusion} demonstrate that diffusion LLMs can be used to invert the conditional distribution $p(y \mid x)$ to sample diverse, high-likelihood adversarial prompts from specific manifold regions that target harmful response $y$ without requiring gradient-based optimization. We use this approach to better approximate the population risk in adversarial training.


\section{Conclusion}

\textbf{Limitations.} 
While \DAT{} demonstrates the efficacy of generative surrogates, it currently relies on conditional sampling from a specific class of diffusion models. 
Future work could explore alternative strategies for approximating the data distribution, such as unconditional sampling from generative models, a technique proven effective in the image domain~\citep{wang2023better}.

In this work, we observe that while most adversarial training approaches focus on the inner loop of computing effective adversaries, they often ignore the potential limitations inherent in approximating the population risk via finite empirical datasets. 
To address this, we propose Distributional Adversarial Training (\DAT), a theoretically grounded framework that leverages Diffusion LLMs as generative surrogates to better approximate this risk. 
By actively sampling diverse, data-specific adversarial prompts from the joint distribution, \DAT{} effectively closes the generalization gap that limits existing methods. 
Empirically, our approach achieves substantially higher worst-case robustness against both model-specific and data-specific attacks, while consistently maintaining superior utility-robustness trade-offs compared to state-of-the-art baselines.

\clearpage
\section*{Impact Statement}

This work introduces Distributional Adversarial Training (DAT), a method designed to improve the reliability of Large Language Models (LLMs) against adversarial attacks. The primary impact of this research is on the area of AI safety. By improving the robustness of models against attacks, DAT reduces the likelihood of models being manipulated into generating harmful content. We do not identify specific negative impacts of our work, which we feel must be specifically highlighted here.

\bibliography{example_paper}

@inproceedings{goodfellow_explaining_2015,
	title = {Explaining and harnessing adversarial examples},
	booktitle = {ICLR},
	author = {Goodfellow, Ian J. and Shlens, Jonathon and Szegedy, Christian},
	year = {2015},
}

@inproceedings{madry_towards_2018,
	title = {Towards {Deep} {Learning} {Models} {Resistant} to {Adversarial} {Attacks}},
	booktitle = {ICLR},
	author = {Madry, Aleksander and Makelov, Aleksandar and Schmidt, Ludwig and Tsipras, Dimitris and Vladu, Adrian},
	year = {2018},
}

@inproceedings{wong_fast_2019,
	title = {Fast is better than free: {Revisiting} adversarial training},
	booktitle = {ICLR},
	author = {Wong, Eric and Rice, Leslie and Kolter, J. Zico},
	year = {2019},
}

@inproceedings{liu_gandef_2019,
	address = {Cham},
	series = {{IFIP} {Advances} in {Information} and {Communication} {Technology}},
	title = {{GanDef}: {A} {GAN} {Based} {Adversarial} {Training} {Defense} for {Neural} {Network} {Classifier}},
	isbn = {978-3-030-22312-0},
	shorttitle = {{GanDef}},
	doi = {10.1007/978-3-030-22312-0_2},
	language = {en},
	booktitle = {{ICT} {Systems} {Security} and {Privacy} {Protection}},
	publisher = {Springer International Publishing},
	author = {Liu, Guanxiong and Khalil, Issa and Khreishah, Abdallah},
	editor = {Dhillon, Gurpreet and Karlsson, Fredrik and Hedström, Karin and Zúquete, André},
	year = {2019},
	pages = {19--32},
}

@inproceedings{kireev2022effectiveness,
  title={On the effectiveness of adversarial training against common corruptions},
  author={Kireev, Klim and Andriushchenko, Maksym and Flammarion, Nicolas},
  booktitle={UAI},
  pages={1012--1021},
  year={2022},
}

@inproceedings{shafahi_adversarial_2019,
	title = {Adversarial training for free!},
	booktitle = {NeurIPS},
	author = {Shafahi, Ali and Najibi, Mahyar and Ghiasi, Mohammad Amin and Xu, Zheng and Dickerson, John and Studer, Christoph and Davis, Larry S and Taylor, Gavin and Goldstein, Tom},
	year = {2019},
}

@article{zou2024improving,
  title={Improving Alignment and Robustness with Short Circuiting},
  author={Zou, Andy and Phan, Long and Wang, Justin and Duenas, Derek and Lin, Maxwell and Andriushchenko, Maksym and Wang, Rowan and Kolter, Zico and Fredrikson, Matt and Hendrycks, Dan},
  journal={arXiv preprint arXiv:2406.04313},
  year={2024}
}

@article{casper2024defending,
  title={Defending Against Unforeseen Failure Modes with Latent Adversarial Training},
  author={Casper, Stephen and Schulze, Lennart and Patel, Oam and Hadfield-Menell, Dylan},
  journal={arXiv preprint arXiv:2403.05030},
  year={2024}
}

@article{dekany2025mixat,
  title={MixAT: Combining Continuous and Discrete Adversarial Training for LLMs},
  author={D{\'e}k{\'a}ny, Csaba and Balauca, Stefan and Staab, Robin and Dimitrov, Dimitar I and Vechev, Martin},
  journal={arXiv preprint arXiv:2505.16947},
  year={2025}
}

@inproceedings{wang2023better,
  title={Better diffusion models further improve adversarial training},
  author={Wang, Zekai and Pang, Tianyu and Du, Chao and Lin, Min and Liu, Weiwei and Yan, Shuicheng},
  booktitle={ICML},
  year={2023},
}

@article{sheshadri2024latent,
  title={Latent adversarial training improves robustness to persistent harmful behaviors in llms},
  author={Sheshadri, Abhay and Ewart, Aidan and Guo, Phillip and Lynch, Aengus and Wu, Cindy and Hebbar, Vivek and Sleight, Henry and Stickland, Asa Cooper and Perez, Ethan and Hadfield-Menell, Dylan and others},
  journal={arXiv preprint arXiv:2407.15549},
  year={2024}
}

@inproceedings{
yu2025robust,
title={Robust {LLM} safeguarding via refusal feature adversarial training},
author={Lei Yu and Virginie Do and Karen Hambardzumyan and Nicola Cancedda},
booktitle={The Thirteenth International Conference on Learning Representations},
year={2025},
url={https://openreview.net/forum?id=s5orchdb33}
}

@article{liu2024adversarial,
  title={Adversarial tuning: Defending against jailbreak attacks for llms},
  author={Liu, Fan and Xu, Zhao and Liu, Hao},
  journal={arXiv preprint arXiv:2406.06622},
  year={2024}
}

@article{yong2023low,
  title={Low-resource languages jailbreak gpt-4},
  author={Yong, Zheng-Xin and Menghini, Cristina and Bach, Stephen H},
  journal={arXiv preprint arXiv:2310.02446},
  year={2023}
}

@article{ouyang2022training,
  title={Training language models to follow instructions with human feedback},
  author={Ouyang, Long and Wu, Jeffrey and Jiang, Xu and Almeida, Diogo and Wainwright, Carroll and Mishkin, Pamela and Zhang, Chong and Agarwal, Sandhini and Slama, Katarina and Ray, Alex and others},
  journal={NeurIPS},
  year={2022}
}

@article{mazeika2024harmbench,
  title={Harmbench: A standardized evaluation framework for automated red teaming and robust refusal},
  author={Mazeika, Mantas and Phan, Long and Yin, Xuwang and Zou, Andy and Wang, Zifan and Mu, Norman and Sakhaee, Elham and Li, Nathaniel and Basart, Steven and Li, Bo and others},
  journal={arXiv preprint arXiv:2402.04249},
  year={2024}
}

@article{andriushchenko2024jailbreaking,
  title={Jailbreaking leading safety-aligned llms with simple adaptive attacks},
  author={Andriushchenko, Maksym and Croce, Francesco and Flammarion, Nicolas},
  journal={arXiv preprint arXiv:2404.02151},
  year={2024}
}

@article{zou2023universal,
  title={Universal and transferable adversarial attacks on aligned language models},
  author={Zou, Andy and Wang, Zifan and Kolter, J Zico and Fredrikson, Matt},
  journal={arXiv preprint arXiv:2307.15043},
  year={2023}
}

@article{lapid2023open,
  title={Open Sesame! Universal Black Box Jailbreaking of Large Language Models},
  author={Lapid, Raz and Langberg, Ron and Sipper, Moshe},
  journal={arXiv preprint arXiv:2309.01446},
  year={2023}
}

@misc{zou2024circuitbreaker,
title={Improving Alignment and Robustness with Circuit Breakers},
author={Andy Zou and Long Phan and Justin Wang and Derek Duenas and Maxwell Lin and Maksym Andriushchenko and Rowan Wang and Zico Kolter and Matt Fredrikson and Dan Hendrycks},
year={2024},
eprint={2406.04313},
archivePrefix={arXiv},
primaryClass={cs.LG}
}

@article{zhu2023autodan,
  title={Autodan: Automatic and interpretable adversarial attacks on large language models},
  author={Zhu, Sicheng and Zhang, Ruiyi and An, Bang and Wu, Gang and Barrow, Joe and Wang, Zichao and Huang, Furong and Nenkova, Ani and Sun, Tong},
  journal={arXiv preprint arXiv:2310.15140},
  year={2023}
}

@article{li2024llm,
  title={Llm defenses are not robust to multi-turn human jailbreaks yet},
  author={Li, Nathaniel and Han, Ziwen and Steneker, Ian and Primack, Willow and Goodside, Riley and Zhang, Hugh and Wang, Zifan and Menghini, Cristina and Yue, Summer},
  journal={arXiv preprint arXiv:2408.15221},
  year={2024}
}

@article{chao2024jailbreakbench,
  title={Jailbreakbench: An open robustness benchmark for jailbreaking large language models},
  author={Chao, Patrick and Debenedetti, Edoardo and Robey, Alexander and Andriushchenko, Maksym and Croce, Francesco and Sehwag, Vikash and Dobriban, Edgar and Flammarion, Nicolas and Pappas, George J and Tramer, Florian and others},
  journal={arXiv preprint arXiv:2404.01318},
  year={2024}
}

@article{andriushchenko2024does,
  title={Does Refusal Training in LLMs Generalize to the Past Tense?},
  author={Andriushchenko, Maksym and Flammarion, Nicolas},
  journal={arXiv preprint arXiv:2407.11969},
  year={2024}
}

@article{souly2024strongreject,
  title={A strongreject for empty jailbreaks},
  author={Souly, Alexandra and Lu, Qingyuan and Bowen, Dillon and Trinh, Tu and Hsieh, Elvis and Pandey, Sana and Abbeel, Pieter and Svegliato, Justin and Emmons, Scott and Watkins, Olivia and others},
  journal={arXiv preprint arXiv:2402.10260},
  year={2024}
}

@article{geisler2025reinforce,
  title={{REINFORCE} Adversarial Attacks on Large Language Models: An Adaptive, Distributional, and Semantic Objective},
  author={Geisler, Simon and Wollschl{\"a}ger, Tom and Abdalla, MHI and Cohen-Addad, Vincent and Gasteiger, Johannes and G{\"u}nnemann, Stephan},
  journal={arXiv preprint arXiv:2502.17254},
  year={2025}
}

@article{hughes2024best,
  title={Best-of-n jailbreaking},
  author={Hughes, John and Price, Sara and Lynch, Aengus and Schaeffer, Rylan and Barez, Fazl and Koyejo, Sanmi and Sleight, Henry and Jones, Erik and Perez, Ethan and Sharma, Mrinank},
  journal={arXiv preprint arXiv:2412.03556},
  year={2024}
}

@article{geisler2024attacking,
  title={Attacking Large Language Models with Projected Gradient Descent},
  author={Geisler, Simon and Wollschl{\"a}ger, Tom and Abdalla, MHI and Gasteiger, Johannes and G{\"u}nnemann, Stephan},
  journal={arXiv preprint arXiv:2402.09154},
  year={2024}
}

@article{chao2023jailbreakingpair,
  title={Jailbreaking black box large language models in twenty queries},
  author={Chao, Patrick and Robey, Alexander and Dobriban, Edgar and Hassani, Hamed and Pappas, George J and Wong, Eric},
  journal={arXiv preprint arXiv:2310.08419},
  year={2023}
}

@misc{nie2025largelanguagediffusionmodels,
      title={Large Language Diffusion Models}, 
      author={Shen Nie and Fengqi Zhu and Zebin You and Xiaolu Zhang and Jingyang Ou and Jun Hu and Jun Zhou and Yankai Lin and Ji-Rong Wen and Chongxuan Li},
      year={2025},
      eprint={2502.09992},
      archivePrefix={arXiv},
      primaryClass={cs.CL},
      url={https://arxiv.org/abs/2502.09992}, 
}

@misc{zhu2025llada15variancereducedpreference,
      title={LLaDA 1.5: Variance-Reduced Preference Optimization for Large Language Diffusion Models}, 
      author={Fengqi Zhu and Rongzhen Wang and Shen Nie and Xiaolu Zhang and Chunwei Wu and Jun Hu and Jun Zhou and Jianfei Chen and Yankai Lin and Ji-Rong Wen and Chongxuan Li},
      year={2025},
      eprint={2505.19223},
      archivePrefix={arXiv},
      primaryClass={cs.LG},
      url={https://arxiv.org/abs/2505.19223}, 
}

@inproceedings{schmidt2018adversarially,
  title={Adversarially robust generalization requires more data},
  author={Schmidt, Ludwig and Santurkar, Shibani and Tsipras, Dimitris and Talwar, Kunal and Madry, Aleksander},
  booktitle={NeurIPS},
  year={2018}
}

@inproceedings{szegedy_intriguing_2014,
	title = {Intriguing properties of neural networks},
	booktitle = {ICLR},
	author = {Szegedy, Christian and Zaremba, Wojciech and Sutskever, Ilya and Bruna, Joan and Erhan, Dumitru and Goodfellow, Ian J. and Fergus, Rob},
	year = {2014},
}

@article{grattafiori2024llama,
  title={The llama 3 herd of models},
  author={Grattafiori, Aaron and Dubey, Abhimanyu and Jauhri, Abhinav and Pandey, Abhinav and Kadian, Abhishek and Al-Dahle, Ahmad and Letman, Aiesha and Mathur, Akhil and Schelten, Alan and Vaughan, Alex and others},
  journal={arXiv preprint arXiv:2407.21783},
  year={2024}
}

@misc{qwen2025qwen25technicalreport,
      title={Qwen2.5 Technical Report}, 
      author={Qwen and : and An Yang and Baosong Yang and Beichen Zhang and Binyuan Hui and Bo Zheng and Bowen Yu and Chengyuan Li and Dayiheng Liu and Fei Huang and Haoran Wei and Huan Lin and Jian Yang and Jianhong Tu and Jianwei Zhang and Jianxin Yang and Jiaxi Yang and Jingren Zhou and Junyang Lin and Kai Dang and Keming Lu and Keqin Bao and Kexin Yang and Le Yu and Mei Li and Mingfeng Xue and Pei Zhang and Qin Zhu and Rui Men and Runji Lin and Tianhao Li and Tianyi Tang and Tingyu Xia and Xingzhang Ren and Xuancheng Ren and Yang Fan and Yang Su and Yichang Zhang and Yu Wan and Yuqiong Liu and Zeyu Cui and Zhenru Zhang and Zihan Qiu},
      year={2025},
      eprint={2412.15115},
      archivePrefix={arXiv},
      primaryClass={cs.CL},
      url={https://arxiv.org/abs/2412.15115}, 
}

@article{tunstall2023zephyr,
  title={Zephyr: Direct distillation of lm alignment},
  author={Tunstall, Lewis and Beeching, Edward and Lambert, Nathan and Rajani, Nazneen and Rasul, Kashif and Belkada, Younes and Huang, Shengyi and Von Werra, Leandro and Fourrier, Cl{\'e}mentine and Habib, Nathan and others},
  journal={arXiv preprint arXiv:2310.16944},
  year={2023}
}

@misc{hendrycks2021measuringmassivemultitasklanguage,
      title={Measuring Massive Multitask Language Understanding}, 
      author={Dan Hendrycks and Collin Burns and Steven Basart and Andy Zou and Mantas Mazeika and Dawn Song and Jacob Steinhardt},
      year={2021},
      eprint={2009.03300},
      archivePrefix={arXiv},
      primaryClass={cs.CY},
      url={https://arxiv.org/abs/2009.03300}, 
}

@inproceedings{reimers-gurevych-2019-sentence,
    title = "Sentence-{BERT}: Sentence Embeddings using {S}iamese {BERT}-Networks",
    author = "Reimers, Nils  and
      Gurevych, Iryna",
    editor = "Inui, Kentaro  and
      Jiang, Jing  and
      Ng, Vincent  and
      Wan, Xiaojun",
    booktitle = "Proceedings of the 2019 Conference on Empirical Methods in Natural Language Processing and the 9th International Joint Conference on Natural Language Processing (EMNLP-IJCNLP)",
    month = nov,
    year = "2019",
    address = "Hong Kong, China",
    publisher = "Association for Computational Linguistics",
    url = "https://aclanthology.org/D19-1410/",
    doi = "10.18653/v1/D19-1410",
    pages = "3982--3992"
}

@article{team2025gemma,
  title={Gemma 3 technical report},
  author={{Gemma Team} and Kamath, Aishwarya and Ferret, Johan and Pathak, Shreya and Vieillard, Nino and Merhej, Ramona and Perrin, Sarah and Matejovicova, Tatiana and Ram{\'e}, Alexandre and Rivi{\`e}re, Morgane and others},
  journal={arXiv preprint arXiv:2503.19786},
  year={2025}
}

@article{allenai:arc,
      author    = {Peter Clark  and Isaac Cowhey and Oren Etzioni and Tushar Khot and
                    Ashish Sabharwal and Carissa Schoenick and Oyvind Tafjord},
      title     = {Think you have Solved Question Answering? Try ARC, the AI2 Reasoning Challenge},
      journal   = {arXiv:1803.05457v1},
      year      = {2018},
}

@inproceedings{rottger-etal-2024-xstest,
    title = "{XST}est: A Test Suite for Identifying Exaggerated Safety Behaviours in Large Language Models",
    author = {R{\"o}ttger, Paul  and
      Kirk, Hannah  and
      Vidgen, Bertie  and
      Attanasio, Giuseppe  and
      Bianchi, Federico  and
      Hovy, Dirk},
    editor = "Duh, Kevin  and
      Gomez, Helena  and
      Bethard, Steven",
    booktitle = "Proceedings of the 2024 Conference of the North American Chapter of the Association for Computational Linguistics: Human Language Technologies (Volume 1: Long Papers)",
    month = jun,
    year = "2024",
    address = "Mexico City, Mexico",
    publisher = "Association for Computational Linguistics",
    url = "https://aclanthology.org/2024.naacl-long.301/",
    doi = "10.18653/v1/2024.naacl-long.301",
    pages = "5377--5400"
}

@article{beyer2025llm,
  title={Llm-safety evaluations lack robustness},
  author={Beyer, Tim and Xhonneux, Sophie and Geisler, Simon and Gidel, Gauthier and Schwinn, Leo and G{\"u}nnemann, Stephan},
  journal={arXiv preprint arXiv:2503.02574},
  year={2025}
}

@article{beyer2025sampling,
  title={Sampling-aware adversarial attacks against large language models},
  author={Beyer, Tim and Scholten, Yan and Schwinn, Leo and G{\"u}nnemann, Stephan},
  journal={arXiv preprint arXiv:2507.04446},
  year={2025}
}

@inproceedings{schwinn2023adversarial,
  title={Adversarial attacks and defenses in large language models: Old and new threats},
  author={Schwinn, Leo and Dobre, David and G{\"u}nnemann, Stephan and Gidel, Gauthier},
  booktitle={NeurIPS, ICBINB Workshop},
  year={2023},
}

@inproceedings{scholten2025a,
title={A Probabilistic Perspective on Unlearning and Alignment for Large Language Models},
author={Yan Scholten and Stephan G{\"u}nnemann and Leo Schwinn},
booktitle={ICLR},
year={2025},
}

@inproceedings{schwinn2024soft,
title={Soft Prompt Threats: Attacking Safety Alignment and Unlearning in Open-Source {LLM}s through the Embedding Space},
author={Leo Schwinn and David Dobre and Sophie Xhonneux and Gauthier Gidel and Stephan G{\"u}nnemann},
booktitle={NeurIPS},
year={2024},
}

@article{xhonneux2024efficient,
  title={Efficient adversarial training in llms with continuous attacks},
  author={Xhonneux, Sophie and Sordoni, Alessandro and G{\"u}nnemann, Stephan and Gidel, Gauthier and Schwinn, Leo},
  journal={NeurIPS},
  year={2024}
}

@article{ludke2025diffusion,
  title={Diffusion LLMs are Natural Adversaries for any LLM},
  author={L{\"u}dke, David and Wollschl{\"a}ger, Tom and Ungermann, Paul and G{\"u}nnemann, Stephan and Schwinn, Leo},
  journal={arXiv preprint arXiv:2511.00203},
  year={2025}
}

@article{beyer2025fast,
  title={Fast Proxies for LLM Robustness Evaluation},
  author={Beyer, Tim and Schuchardt, Jan and Schwinn, Leo and G{\"u}nnemann, Stephan},
  journal={arXiv preprint arXiv:2502.10487},
  year={2025}
}

@article{beyer2025adversariallm,
  title={AdversariaLLM: A Unified and Modular Toolbox for LLM Robustness Research},
  author={Beyer, Tim and Dornbusch, Jonas and Steimle, Jakob and Ladenburger, Moritz and Schwinn, Leo and G{\"u}nnemann, Stephan},
  journal={arXiv preprint arXiv:2511.04316},
  year={2025}
}

@article{schwinn2025adversarial,
  title={Adversarial alignment for llms requires simpler, reproducible, and more measurable objectives},
  author={Schwinn, Leo and Scholten, Yan and Wollschl{\"a}ger, Tom and Xhonneux, Sophie and Casper, Stephen and G{\"u}nnemann, Stephan and Gidel, Gauthier},
  journal={arXiv preprint arXiv:2502.11910},
  year={2025}
}

@article{gibbs2002choosingboundingprobabilitymetrics,
      title={On choosing and bounding probability metrics}, 
      author={Alison L. Gibbs and Francis Edward Su},
      year={2002},
      journal={arXiv preprint arXiv:math/0209021},
      eprint={math/0209021},
      archivePrefix={arXiv},
      primaryClass={math.PR},
      url={https://arxiv.org/abs/math/0209021}, 
}
\bibliographystyle{icml2026}

\clearpage
\appendix
\onecolumn
\section{Proof of Theorem 1}
\label{app:proof_theorem1}

\subsection{Setup}
Let $q$ denote a natural language distribution over $(x,y) \in \mathcal{Z}$ for a set of token sequences $\mathcal{Z}$.
Further, let $h:\mathcal{Z}\to\{0,1\}$ be a harmfulness indicator, where $h(y)=1$ denotes a harmful response.
We define the distribution of prompts and their harmful responses as
\[
\tilde q(x,y)
\;:=\;
q\bigl(x,y \mid h(y)=1\bigr)
\;=\;
\frac{q(x,y)\,\mathbf 1\{h(y)=1\}}{q(h(y)=1)}.
\]
We let $p^{\mathrm{AR}}_\theta(y|x)$ denote the target autoregressive model.
We define the robust loss
\[
\ell_{rob}(x, y; \theta)
\;:=\;
\sup_{\delta \in \Delta}\ \mathcal{L}\!\bigl(p^{\mathrm{AR}}_\theta(y\mid x+\delta)\bigr),
\]
where $\Delta$ is a perturbation set (e.g.\ in embedding space) and use the standard total variation distance
\[
\TV(P,Q)
\;:=\;
\sup_{A}\bigl|P(A)-Q(A)\bigr|
\;=\;
\frac{1}{2}\,\|P-Q\|_1.
\]

\paragraph{Assumptions}
We assume this loss is bounded, i.e.,
\begin{equation}
    |\ell_{rob}(x,y;\theta)| \le M \qquad \text{for all } (x,y)\in\mathcal{Z}.
    \label{eq:rob-loss-bounded}
\end{equation}
This assumption can be enforced in practice by \emph{loss clipping} (e.g.\ replacing $\ell_{rob}$ with
$\mathrm{clip}(\ell_{rob},-M,M)$), and it is used below to apply a TV-based bound.

Further, we assume that the diffusion surrogate satisfies the conditional fidelity bound
\[
\mathbb{E}_{y\sim \tilde q(y)}
\left[
\TV\!\bigl(q(\cdot\mid y),p^{\mathrm{diff}}_\theta(\cdot\mid y)\bigr)
\right]\le \varepsilon.
\]

\paragraph{True population robust risk.}
\begin{equation}
    \mathcal{R}_{pop}(\theta) = \mathbb{E}_{(x,y) \sim \tilde{q}} [\ell_{rob}(x, y; \theta)].
\end{equation}

\paragraph{Surrogate (DAT) risk.}
Our surrogate objective replaces the conditional distribution $\tilde{q}(x|y)$ with the diffusion surrogate
$p_{\theta}^{\mathrm{diff}}(x|y)$:
\begin{equation}
    \mathcal{R}_{diff}(\theta) = \mathbb{E}_{y \sim \tilde{q}(y)} \mathbb{E}_{x \sim p_{\theta}^{\mathrm{diff}}(x|y)} [\ell_{rob}(x, y; \theta)].
\end{equation}

\paragraph{Restriction to harmful $y$.}
Since $\tilde q$ is the restriction of $q$ to harmful responses, its marginal satisfies
\[
\tilde q(y) \;=\; q\bigl(y \mid h(y)=1\bigr).
\]
Moreover, for any $y$ with $h(y)=1$ (equivalently, whenever $\tilde q(y)>0$), the conditional over prompts is unchanged:
\begin{equation}
    \tilde q(x\mid y)
    \;=\;
    \frac{\tilde q(x,y)}{\tilde q(y)}
    \;=\;
    \frac{q(x,y)}{q(y)}
    \;=\;
    q(x\mid y).
    \label{eq:conditional-unchanged}
\end{equation}

\subsection{Proof}
We wish to bound the absolute difference $|\mathcal{R}_{pop}(\theta) - \mathcal{R}_{diff}(\theta)|$.

\begin{proof}

By the law of iterated expectations under $\tilde q$ (equivalently, rearranging sums since $\mathcal{Z}$ is discrete) and using~\eqref{eq:conditional-unchanged}, for harmful $y$ we may equivalently view the inner conditional $x \sim \tilde{q}(x \mid y)$ as $x \sim q(x\mid y)$,
\begin{equation}\label{eq:Rpop-iterated}
\begin{aligned}
\mathcal{R}_{pop}(\theta)
&=
\mathbb{E}_{(x,y)\sim \tilde q}\bigl[\ell_{rob}(x,y;\theta)\bigr] \\
&=
\mathbb{E}_{y\sim \tilde q(y)}\ \mathbb{E}_{x\sim \tilde q(x\mid y)}\bigl[\ell_{rob}(x,y;\theta)\bigr] \\
&=
\mathbb{E}_{y\sim \tilde q(y)}\ \mathbb{E}_{x\sim q(x\mid y)}\bigl[\ell_{rob}(x,y;\theta)\bigr].
\end{aligned}
\end{equation}

Combining~\eqref{eq:Rpop-iterated} with the definition of $\mathcal{R}_{diff}$ gives
\begin{align}
    |\mathcal{R}_{pop}(\theta) - \mathcal{R}_{diff}(\theta)|
    &=
    \left|
        \mathbb{E}_{y \sim \tilde{q}(y)}
        \left[
            \mathbb{E}_{x \sim q(x\mid y)} [\ell_{rob}]
            -
            \mathbb{E}_{x \sim p_{\theta}^{\mathrm{diff}}(x\mid y)} [\ell_{rob}]
        \right]
    \right|
    \label{eq:diff-1}
    \\
    &\le
    \mathbb{E}_{y \sim \tilde{q}(y)}
    \left[
        \left|
            \mathbb{E}_{x \sim q(x\mid y)} [\ell_{rob}]
            -
            \mathbb{E}_{x \sim p_{\theta}^{\mathrm{diff}}(x\mid y)} [\ell_{rob}]
        \right|
    \right],
    \label{eq:diff-2}
\end{align}
where the triangle inequality is applied.

We use the following standard inequality for any function $f:\mathcal{Z}\to[-M,M]$ and
distributions $P,Q$ on $\mathcal{Z}$,
\begin{equation}
    \left| \mathbb{E}_{x \sim P} [f(x)] - \mathbb{E}_{x \sim Q} [f(x)] \right|
    \le
    2M \cdot \TV(P, Q),
    \label{eq:tv-ineq}
\end{equation}
where $\TV$ is total variation \cite{gibbs2002choosingboundingprobabilitymetrics}.
Applying~\eqref{eq:tv-ineq} to~\eqref{eq:diff-2} with
\[
P= q(\cdot\mid y),
\qquad
Q=p_\theta^{\mathrm{diff}}(\cdot\mid y),
\qquad
f(\cdot)=\ell_{rob}(\cdot,y;\theta),
\]
and using the boundedness assumption~\eqref{eq:rob-loss-bounded}, we obtain
\begin{equation}
    \left|
        \mathbb{E}_{x \sim q(x\mid y)} [\ell_{rob}]
        -
        \mathbb{E}_{x \sim p_{\theta}^{\mathrm{diff}}(x\mid y)} [\ell_{rob}]
    \right|
    \le
    2M \cdot \TV\!\bigl(q(\cdot\mid y),\ p_{\theta}^{\mathrm{diff}}(\cdot\mid y)\bigr).
    \label{eq:inner-tv}
\end{equation}
Substituting~\eqref{eq:inner-tv} into~\eqref{eq:diff-2} yields
\begin{align}
    |\mathcal{R}_{pop}(\theta) - \mathcal{R}_{diff}(\theta)|
    &\le
    \mathbb{E}_{y \sim \tilde{q}(y)}
    \left[
        2M \cdot \TV\!\bigl(q(\cdot\mid y),\ p_{\theta}^{\mathrm{diff}}(\cdot\mid y)\bigr)
    \right]
    \nonumber\\
    &=
    2M\cdot
    \mathbb{E}_{y \sim \tilde{q}(y)}
    \left[
        \TV\!\bigl(q(\cdot\mid y),\ p_{\theta}^{\mathrm{diff}}(\cdot\mid y)\bigr)
    \right].
    \label{eq:outer-tv}
\end{align}

Finally, using the fidelity assumption we obtain
\[
|\mathcal{R}_{pop}(\theta) - \mathcal{R}_{diff}(\theta)|
\le
2M\cdot \varepsilon.
\]
\end{proof}

\noindent This concludes the proof. The bound shows that the approximation error of our surrogate objective is linear in the
\emph{fidelity} of the diffusion surrogate, measured as an expected conditional TV distance under the harmful-response
marginal $y\sim\tilde q(y)$.

\section{Experiment Details}
\label{app:exp_details}


    \subsection{Models and Adapters}\label{app:Models}

\begin{table}[H]
    \centering
    \small
    \caption{Sources of Hugging Face models and adapters.}
    \label{tab:hf_models_adapters}
    \begin{tabular}{l l l}
    \toprule
    \textbf{Base Model} & \textbf{Adapter} & \textbf{HF Source} \\
    \midrule
    \multirow{4}{*}{Llama3-8B}
        & --        & \texttt{meta-llama/Meta-Llama-3-8B-Instruct} \\
        & CB        & \texttt{GraySwanAI/Llama-3-8B-Instruct-RR} \\
        & LAT       & \texttt{LLM-LAT/robust-llama3-8b-instruct} \\
        & MixAT-GCG & \texttt{INSAIT-Institute/Llama3-8B-MixAT-GCG} \\
    \midrule
    \multirow{2}{*}{Qwen2.5-14B}
        & --        & \texttt{Qwen/Qwen2.5-14B-Instruct} \\
        & MixAT-GCG & \texttt{INSAIT-Institute/Qwen-14B-MixAT-GCG} \\
    \midrule
    \multirow{1}{*}{Qwen2.5-7B}
        & --        & \texttt{Qwen/Qwen2.5-7B-Instruct} \\
    \multirow{1}{*}{Zephyr-7B}
        & --        & \texttt{HuggingFaceH4/zephyr-7b-beta} \\
    \multirow{1}{*}{Gemma3-12B}
        & --        & \texttt{google/gemma-3-12b-it} \\
    \multirow{1}{*}{LLaDA-8B}
        & --        & \texttt{GSAI-ML/LLaDA-8B-Base} \\
    \bottomrule
    \end{tabular}
\end{table}

\subsection{Training}\label{app:Training}

\begin{table}[H]
\centering
\small
\caption{Hyperparameters for models trained with \DAT{}.}
\label{tab:dat_hparams}
\begin{tabular}{lcc}
\toprule
\textbf{Hyperparameter} & \textbf{Llama3-8B} & \textbf{Qwen2.5-14B} \\
\midrule
Learning rate           & $1\times10^{-4}$ & $1\times10^{-4}$ \\
Batch size              & 8                & 8                \\
Number of epochs        & 6                & 6                \\
Optimizer               & AdamW            & AdamW            \\
Adv.\ learning rate     & $1\times10^{-1}$ & $1\times10^{-1}$ \\
Attack-steps            & 20              &  40            \\
Attack-radius           & $1\times10^{7}$  & $1\times10^{7}$  \\
Utility data ratio      & 0.5              & 0.5              \\
Max seq.\ length        & 128              & 128              \\
Away weight             & 0.5              & 0.3              \\
Toward weight           & 0.5              & 0.7              \\
Utility weight          & 0.45             & 0.25             \\
Quantization            & none (bf16)      & none (bf16)      \\
\bottomrule
\end{tabular}
\end{table}

\begin{table}[H]
\centering
\small
\caption{Hyperparameters for models trained with CAT.}
\label{tab:cat_hparams}
\begin{tabular}{lcc}
\toprule
\textbf{Hyperparameter} & \textbf{Llama3-8B} & \textbf{Qwen2.5-14B} \\
\midrule
Learning rate           & $2\times10^{-4}$ & $1\times10^{-4}$ \\
Batch size              & 16                & 16                \\
Number of epochs        & 6                & 6                \\
Optimizer               & AdamW            & AdamW            \\
Attack-radius           & $1\times10^{7}$  & $1\times10^{7}$  \\
Adv.\ learning rate     & $1\times10^{-1}$ & $1\times10^{-1}$ \\
Attack-steps            & 20              &  20            \\
Utility data ratio      & 0.5              & 0.5              \\
Max seq.\ length        & 128              & 128              \\
Away weight             & 0.5              & 0.3              \\
Toward weight           & 0.5              & 0.7              \\
Utility weight          & 0.25             & 0.25             \\
Quantization            & none (fp16)      & none (bf16)      \\
\bottomrule
\end{tabular}
\end{table}

\subsection{Attacks}\label{app:Attacks}
We used the attack code base \href{https://github.com/LLM-QC/AdversariaLLM}{AdversariaLLM} \cite{beyer2025adversariallm}. If an attack uses multiple steps or produces different versions of attacks, we prompt the target model on each candidate and report success if at least one candidate achieves a successful judge score. Unless stated otherwise, we use greedy decoding when sampling the target mode.
\begin{itemize}
    \item \textbf{GCG:} We run a suffix attack with the suffix initialized with "x x x x x x x x x x x x x x x x x x x x" for $250$ steps, search width $512$ and select the top-$256$ most promising candidates.
    \item \textbf{PAIR:} We run for $20$ steps with one stream. We use Gemma 3 12B Instruct \cite{team2025gemma} as the chosen attacker model.
    \item \textbf{Direct sampling:} We sample $1000$ generations for each unperturbed prompt using multinomial sampling with a temperature of $1.0$.
    \item \textbf{Inpainting:} We use a dataset of $1024$ diffusion attack prompts.
    \item \textbf{Best-of-N:} We generate $1000$ perturbed versions of each prompt and sample a single generation for each with a temperature of $1.0$. We apply the default perturbation strength $\sigma = 0.4$, and allow all perturbations (word scrambling, capitalization, ascii perturbations).
    \item \textbf{Best-of-All:} For a particular original instruction and target, we determine the ensemble to be successful if any of the attacks was successful.   
\end{itemize}

\end{document}